\renewcommand{\cite}{\citep*}
\newcommand{\F}{\mathcal F} % function class
\newcommand{\Z}{\mathcal Z} % data space
\newcommand{\BorelZ}{\mathscr Z}
\newcommand{\Borel}{{\cal B}}
\renewcommand{\H}{\mathcal H} % entropy
\renewcommand{\P}{\mathbb P} % probability
\newcommand{\E}{\mathbb E} % expectation
\newcommand{\nats}{\mathbb{N}} % natural numbers
\newcommand{\reals}{\mathbb{R}} % real numbers
\newcommand{\drift}{\Delta}
\newcommand{\change}{\Delta}
\newcommand{\disc}{\rho}
\DeclareSymbolFont{bbold}{U}{bbold}{m}{n}
\DeclareSymbolFontAlphabet{\mathbbold}{bbold}
\newcommand{\ind}{\mathbbold{1}}
\newcommand{\vc}{d}
\newcommand{\argmin}{\mathop{\rm argmin}}
\newcommand{\ignore}[1]{}
\newcommand{\oldstuff}[1]{}
\newsavebox{\savepar}
\newtheorem{theorem}{Theorem}
\newtheorem{lemma}{Lemma}
\title{Statistical Learning under Nonstationary \\Mixing Processes}
\author{
Steve Hanneke \\\texttt{steve.hanneke@gmail.com}
%\And Tommi Jaakkola \\Massachusetts Institute of Technology\\ \texttt{tommi@csail.mit.edu}
\And Liu Yang \\\texttt{liu.yang0900@outlook.com}}
\begin{document}
% \nipsfinalcopy is no longer used

\maketitle

\begin{abstract}
  We study a special case of the problem of statistical
  learning without the i.i.d. assumption.  Specifically, we suppose a
  learning method is presented with a sequence of data points, and
  required to make a prediction (e.g., a classification) for each one,
  and can then observe the loss incurred by this prediction. We go
  beyond traditional analyses, which have focused on stationary mixing
  processes or nonstationary product processes, by combining these two
  relaxations to allow nonstationary mixing processes.  We are
  particularly interested in the case of $\beta$-mixing processes,
  with the sum of changes in marginal distributions growing sublinearly in the number of
  samples.  Under these conditions, we propose a learning method, and
  establish that for bounded VC subgraph classes, the cumulative
  excess risk grows sublinearly in the number of predictions, at a
  quantified rate.
\end{abstract}

%%% the journal version might extend this beyond VC subgraph classes, to general classes of uniformly bounded covering numbers (which, for instance, would include VC Hull classes), analogous to Bin Yu's paper.

\section{Introduction}

Our setting is that of stream-based prediction.  At each time $t$, we are given 
access to data points from times $1$ through $t-1$, and are required to produce
a predictor $f_{t}$, which is then evaluated on a new data point at time $t$.
We study this in the \emph{general learning setting} of \cite{vapnik:82,vapnik:98},
which represents the learning objective as an abstract optimization problem.
As an example, in the special case of classification, given access to pairs $(x_1,y_1),\ldots,(x_{t-1},y_{t-1})$,
we would be tasked with producing a function mapping an observed point $x_{t}$ to a classification $\hat{y}_{t}$,
and we would be evaluated on whether $\hat{y}_{t} \neq y_{t}$ (called a \emph{mistake}).
We are then interested in characterizing the rate of growth of the cumulative number of mistakes,
as we repeat this for increasing values of $t$.

To study this problem, we suppose the sequence of observations are
stochastic, subject to some restrictions on their distribution.
Several such restrictions are possible.  For instance, the most-common
assumption used in the vast majority of the statistical learning
literature is that the data are independent and identically
distributed (i.i.d.).
However, some efforts to relax this assumption have also been
explored.  There are essentially two main threads of work toward
relaxing this assumption: relaxing the independence assumption while
maintaining the assumption of identical distributions (or
stationarity), or relaxing the assumption of identical distributions
while maintaining the independence assumption.  In the present work,
we are interested in relaxing these assumptions jointly.
Before getting into the details, let us first briefly review these two threads of the
literature.

Most of the literature on relaxations of the independence assumption focuses on 
\emph{stationary mixing} processes.  At the extreme of this branch, the work of
\cite{adams:10} reveals that any VC class admits
a uniform law of large numbers under stationary ergodic processes.
In particular, this implies that the method of \emph{empirical risk minimization}
approaches excess risk zero in the limit.
However, one cannot establish \emph{rates} of convergence under such 
general conditions as ergodicity.  To establish such rates, other
works have therefore introduced stronger conditions, such as the 
$\beta$-mixing condition.  Specifically, 
\cite{yu:94,karandikar:02} have proven asymptotic rates of uniform convergence 
for VC classes under stationary $\beta$-mixing processes.
%We discuss these results in more detail below.  
One implication of this result is an
asymptotic rate of convergence for the excess risk of empirical risk minimization.
Other works have established rates of convergence for the excess risk
of empirical risk minimization and other learning methods, under related mixing conditions, 
including $\alpha$-mixing \cite{vidyasagar:03}, $\eta$-mixing \cite{kontorovich:07}, 
and $\phi$-mixing \cite{vidyasagar:03}, all under the stationarity assumption.

The other primary direction in the study of the risk of learning methods 
under relaxations of the i.i.d. assumption preserves the independence 
assumption, while allowing the marginal distributions to \emph{drift}
over time.  This thread in the literature has focused on the specific
setting of binary classification.  Specifically, 
\cite{long:99,helmbold:91,helmbold:94,barve:96,barve:97,crammer:10} 
study a setting in which the marginal distribution of the data point at time $t$ has total variation
distance from that of the data point at time $t+1$ at most a given 
upper bound, called the \emph{drift rate} (see also related work by \cite{Bartlett:1992:LSC:130385.130412,freund:97,bartlett:00,drifting-distribution,mohri:12}).
The data points are still assumed to be independent.
The recent works of \cite{concept-drift,mohri:12}
further explore this problem (in a formulation more-closely paralleling
that studied here).  In this setting, the learning method produces a sequence
of predictors (e.g., classifiers), where the method for choosing the 
predictor at time $t$ may depend on all of the data up to time $t-1$.
The results in these works are expressible as bounds on the 
risk at each time $t$ (or sometimes averaged over time), 
as a function of $t$ and the rates of drift of the marginal distributions.

The paper of \cite{mohri:12} also studies a refinement of the notion of ``drift''
compared to the earlier works, such as \cite{barve:96,barve:97}.  Specifically, 
rather than measuring the difference between the next and previous distributions 
by the total variation distance, they instead use a notion of ``discrepancy'' that 
depends directly on the function class being used for learning.  This discrepancy is 
sometimes significantly smaller than the total variation distance, yet plays an 
analogous role in the bounds of \cite{mohri:12} as the total variation distance plays 
in the bounds of \cite{helmbold:94,barve:97}.
To allow for this refined notion of drift, our arguments below are phrased generally 
enough that they can be applied with either notion of drift (discrepancy or total variation).
%we express our results in terms of drift measured by the total variation distance,
%as in the original works of \cite{helmbold:94,barve:97}.  It is conceivable that the analysis
%can be adapted to use the discrepancy measure of \cite{mohri:12},
%while maintaining validity of the results.  Although this modification would sometimes 
%lead to improved bounds, for simplicity we stick to the more familiar total variation 
%distance to describe the rate of drift.

In recent work, \cite{kuznetsov:14} discusses the problem of learning 
from non-stationary mixing processes.  They derive interesting results bounding
the risk at some future time in terms of the empirical risk on all observed data,
with clear implications for the performance of methods such as empirical risk
minimization.  The nature of the results in that work are somewhat different from 
our results below.  However, the spirit of the analysis is similar in many places,
and one can conceivably convert some of those results into a more-closely related
form with a bit of additional effort.  
%Also, the recent work of \cite{kuznetsov:15}
%studies the problem of learning in non-stationary processes, where the drift is 
%described as the change in \emph{conditional} distribution.  This allows them to 
%study a broader family of processes (including non-mixing processes).  It is not
%clear whether these results can be directly compared to the results below, as the 
%description of drift is substantially different, and the bounds are expressed in
%terms of different types of complexity measures.

One significant point of divergence between the present work and that of 
%\cite{kuznetsov:14,kuznetsov:15}, 
\cite{kuznetsov:14}, and indeed all of the above works on product processes 
(aside from certain special cases discussed by \cite{concept-drift}),
is that in the general case, these works require access to the sequence of magnitudes 
of drift of the distribution, or a constant upper bound thereon.  The sequence of 
drift magnitudes is a substantial number of variables to assume we have access to
(linear in the number of data points),
and relying only on a constant upper bound precludes the possibility of sublinear
growth of the cumulative excess risk \cite{helmbold:94,concept-drift}.  The notion of discrepancy
studied by \cite{mohri:12,kuznetsov:14} (see below) can sometimes be estimated from data, 
but only under significant further restrictions on the process.
In contrast, in the present work, we merely assume an asymptotic bound on the rate
of growth of the cumulative amount of drift.  Our learning method then depends only
on the single parameter that this asymptotic growth rate is described in terms of,
and we show that this is enough to achieve sublinear growth of the cumulative 
excess risk, without needing access to the sequence of drift rates or additional restrictions
on the process.  For completeness, we also briefly discuss the case where the
drift rates are known, in Section~\ref{sec:product}.
%This represents one of the main contributions of this work.
%Specifically, the results of \cite{kuznetsov:14} 
%are based on the empirical risk on \emph{all} observed data, and concentration thereof 
%around the actual risk at a future time.  In contrast, the present work aims to 
%achieve relatively good long-run performance, even in certain scenarios where the 
%empirical risk (on all observed data up to that point) is not close to the actual risk 
%(due to the distribution of the earlier data being so different from 
%the distribution of the next sample).

The present work studies learning under general nonstationary processes,
under a condition that allows us to extend the ideas from the above-described
literature on learning from product processes with slowly-drifting marginal 
distributions.  Specifically, we replace the independence condition with a $\beta$-mixing
condition.
In addition to this, we suppose that the sum of distances between 
marginal distributions at adjacent time steps grows only sublinearly
(note that this does \emph{not} require that the sequence of distributions 
be converging).
%we also discuss formulations in terms of a refined notion of discrepancy
%between distributions, proposed by \cite{mohri:12}.
Our objective is then to propose a prediction strategy (for producing 
the $f_{t}$ function), and to characterize the rate of growth of the 
cumulative excess risk over time.
The excess risks are calculated relative to the sequence of \textit{a priori} optimal predictors
among functions in a given function class.
In particular, for any bounded VC subgraph class, we establish 
a rate of growth of the cumulative excess risk that is \emph{sublinear} in the 
number of predictions made.

\subsection{Definitions and Summary of Main Result}
To formalize this setting, we adopt the abstract perspective of the 
\emph{general learning setting} of \cite{vapnik:82,vapnik:98}.  Specifically,
%fix a probability space $(\Omega,\Borel,\P)$,
fix a measurable space $(\Z,\BorelZ)$ and a \emph{function class} $\F$ of measurable functions 
$f : \Z \to [0,1]$.  
For instance, in the special case of classification, $\Z$ would be a set of $(x,y)$ pairs, 
and $\F$ would be a set of functions $f_{h}((x,y)) = \ind[h(x) \neq y]$, where $h$ ranges over a 
set $\H$ of functions (known as the hypothesis class); see \cite{koltchinskii:06,shalev-shwartz:10} for many other examples.
In the general learning setting, the aim of a learning algorithm is to identify a function $f \in \F$
with a relatively small average value, where the average is taken with respect to some unknown
probability measure on $\Z$ (as discussed in more detail below).  For instance, in the classification 
setting described above, this average value corresponds to the probability that $h$ makes a ``mistake'' 
in predicting the value of $y$ from $x$.

For simplicity, to avoid the common measurability issues arising in 
empirical process theory, we will suppose $\F$ is such that the events involved in the proofs below are all measurable
(for instance, this is certainly the case if $\F$ is countable; see \cite{van-der-Vaart:96} for other sufficient conditions).
%$\F$ satisfies $P$-measurability for all $P$ (see \cite{van-der-Vaart:11}). %%% make sure this is the right thing.
%
Let $\vc$ denote the pseudo-dimension of $\F$ \cite{pollard:84,pollard:90,haussler:92,anthony:99}: %%% check that these are the right pollard references
that is, $\vc$ is the largest $k \in \nats \cup \{0\}$ such that $\exists (z_{1},w_{1}),\ldots,(z_{k},w_{k}) \in \Z \times \reals$ with
$|\{ ( \ind[ f(z_{1}) \leq w_{1} ], \ldots, \ind[ f(z_{k}) \leq w_{k} ] ) : f \in \F \}| = 2^{k}$, or is $\infty$ if no such largest $k$ exists.
Throughout this article, we suppose $1 \leq \vc < \infty$ (so that $\F$ is a VC Subgraph class). %; see \cite{van-der-Vaart:96}).

We suppose there is a sequence of $\Z$-valued random variables $Z_{1},Z_{2},\ldots$, called the \emph{data points},
and for each $t \in \nats$, we denote by $P_{t}$ the marginal distribution of the random variable $Z_{t}$.  Also, 
generally, for any random variable $X$, we denote by $\P_{X}$ the distribution of $X$ (i.e., $\P_{X}(\cdot) = \P(X^{-1}(\cdot))$).
For any probability measures $P,Q$ on a measurable space $(\Omega,\Borel)$, we denote by $\|P-Q\| = \sup_{A \in \Borel} P(A)-Q(A)$
the total variation distance between $P$ and $Q$.  Additionally, for probability measures $P,Q$ on the measurable space $(\Z,\BorelZ)$,
we denote by 
\begin{equation*}
\disc(P,Q) = \sup_{f \in \F} \left| \E_{Z \sim P}[f(Z)] - \E_{Z \sim Q}[f(Z)] \right|,
\end{equation*}
a general notion of \emph{discrepancy} introduced by \cite{mansour:09,mohri:12}.
We use $\disc$ below to quantify the magnitude of change in the marginal distribution
of $Z_{t+1}$ compared to $Z_{t}$.
Note that, since every $f \in \F$ is uniformly bounded in $[0,1]$, we clearly have 
\begin{equation*}
\disc(P,Q) \leq \|P-Q\|.
\end{equation*}
Indeed, readers more comfortable with the familiar total variation distance may feel free to replace $\disc(P,Q)$
with $\|P-Q\|$ in all contexts below, and the results and proofs will remain valid without any further modifications.
However, one can construct scenarios in which $\disc(P,Q)$ provides a much smaller value, and generally $\disc(P,Q)$
appears to be more relevant to the learning setting than is the total variation distance.
For each $t \geq 2$, let $\drift_{t} \in [0,1]$ be a value
satisfying 
\begin{equation}
\label{eqn:drift-defn}
%\| P_{t} - P_{t-1} \| \leq \drift_{t},
\disc(P_{t},P_{t-1}) \leq \drift_{t}.
\end{equation}
%where $\| \cdot \|$ denotes the total variation distance: $\| P - Q \| = \sup_{A \in \BorelZ} P(A) - Q(A)$.
For completeness, also define $\drift_{1} = 0$.

To obtain nontrivial results, we are interested in restricting the family of processes.
Specifically, for our main result below (Theorem~\ref{thm:main}), we suppose 
\begin{equation}
\label{eqn:drift-rate}
\sum_{t=1}^{T} \drift_{t} = O(T^{\alpha}),
\end{equation}
for a given value $\alpha \in [0,1)$.
Note that this does not require that the sequence of distributions be converging,
only that its average rate of change slows over time.
We additionally adopt the standard definition of \emph{$\beta$-mixing}, defined as follows.
Following \cite{bradley:83} and \cite{yu:94},
for each $k \in \nats$, %for any process $\{Z_{t}^{\prime}\}_{t=1}^{\infty}$ on $\Z$, 
define
\begin{equation*}
\beta_{k} = 
\frac{1}{2} \sup\left\{ \sum_{i=1}^{I} \sum_{j=1}^{J} | \P(A_{i} \cap B_{j}) - \P(A_{i})\P(B_{j}) |
: \{A_{i}\}_{i} \in \Pi_{\ell}, \{B_{j}\}_{j} \in \Pi^{\prime}_{\ell+k}, \ell \geq 1\right\},
\end{equation*}
where $\Pi_{\ell}$ is defined as the set of $\sigma(\{Z_{1},\ldots,Z_{\ell}\})$-measurable finite partitions,
and $\Pi^{\prime}_{\ell+k}$ is defined as the set of 
$\sigma(\{Z_{\ell+k},Z_{\ell+k+1},\ldots\})$-measurable 
finite partitions.
Then we suppose
\begin{equation}
\label{eqn:beta-rate}
\beta_{k} = O(k^{-r}),
\end{equation}
for some $r \in (0,\infty)$.

Under the assumptions \eqref{eqn:drift-rate} and \eqref{eqn:beta-rate}, we propose a learning method, specified as follows.
Let $\hat{f}_{1}$ be arbitrary.
For each $t \in \nats \setminus \{1\}$, let 
\begin{equation*}
m_{t} = \left\lceil (t-1)^{(1-\alpha)\frac{3+2r}{3+3r}} \right\rceil
\end{equation*}
and 
\begin{equation*}
k_{t} = \left\lceil (t-1)^{(1-\alpha)\frac{1}{1+r}} \right\rceil % m_{t}^{\frac{3}{3+2r}}
\end{equation*}
and choose as a predictor at time $t$ a function\footnote{For simplicity, 
we suppose the minimum is actually \emph{achieved} by some $f \in \F$.  To handle the general case, all of the results continue to hold, 
with only minor technical changes to the proofs, if we instead choose $\hat{f}_{t} \in \F$ with 
$\sum_{s=1}^{\lfloor m_{t}/k_{t} \rfloor} \hat{f}_{t}(Z_{t - s k_{t}})$
sufficiently close to $\inf_{f \in \F}\sum_{s=1}^{\lfloor m_{t}/k_{t} \rfloor} f(Z_{t - s k_{t}})$.}
\begin{equation}
\label{eqn:hatf-defn}
\hat{f}_{t} = \argmin\limits_{f \in \F} \sum_{s=1}^{\lfloor m_{t}/k_{t} \rfloor} f(Z_{t - s k_{t}}).
\end{equation}
%For completeness, $\hat{f}_{1}$ can be defined as an arbitrary function in $\F$.
For $\hat{f}_{t}$ chosen in this way, we prove the following theorem.

%the following method for selecting a predictor $\hat{f}_{t}$ at each time $t \in \nats$.

%\begin{bigboxit}
%0. For $T = 1,2,\ldots$\\
%%1. \quad Let $\hat{m}_{T} = \argmin\limits_{m \in \{1,\ldots,T-1\}} \frac{1}{m} \sum_{s=T-m}^{T-1} \sum_{r=s+1}^{T} r^{\alpha-1} + m^{-r_{\beta}}$\\
%%1. \quad Let $\hat{m}_{T} = \alpha^{r_{\beta}} T^{-\alpha r_{\beta}}$\\
%%1. \quad Let $m_{T} = \left\lfloor T^{\frac{(1-\alpha)(1+s_{\beta})}{1+2s_{\beta}}} \right\rfloor$\\  %%% the important thing here is that m_{T} - m_{T-1} \in \{0,1\}, so that (T-m_{T}) and (T-1) together satisfy the two-sequences assumption of lem:shifting-start
%1. \quad Let $\hat{f}_{T} = \argmin\limits_{f \in \F} \sum_{t=T-m_{T}}^{T-1} f(Z_{t})$
%%3. \quad Predict $\hat{Y}_{T} = \hat{h}_{T}(X_{T})$
%\end{bigboxit}

\begin{theorem}
\label{thm:main}
If \eqref{eqn:drift-rate} and \eqref{eqn:beta-rate} are satisfied, then 
\begin{equation*}
\sum_{t=1}^{T} \E\left[ \hat{f}_{t}(Z_{t}) \right] - \sum_{t=1}^{T} \inf_{f \in \F} \E\left[ f(Z_{t}) \right]
%= O\left( T^{\alpha + (1-\alpha)\frac{3+r}{3+2r}} \right).
= O\left( T^{\frac{3+(2+\alpha)r}{3+3r}} \right).
\end{equation*}
\end{theorem}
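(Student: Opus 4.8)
The plan is to bound the cumulative excess risk by the sum of the per-step expected excess risks $\er_{t} := \E[\hat{f}_{t}(Z_{t})] - \inf_{f \in \F} \E_{Z\sim P_{t}}[f(Z)]$, and to bound each $\er_{t}$ by a sum of three contributions: a \emph{drift} term, measuring via $\disc$ how far the sampling distributions $P_{t-ik_{t}}$ are from the target $P_{t}$; a \emph{decoupling} term, arising from the $\beta$-mixing and the spacing $k_{t}$ between consecutive sampled points; and a \emph{uniform-convergence} term, reflecting that $\hat{f}_{t}$ is an empirical minimizer over only $n_{t} := \lfloor m_{t}/k_{t}\rfloor$ points. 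The exponents in $k_{t}$ and $m_{t}$ are calibrated precisely so that, summed over $t \le T$, each of the three contributions is $O(T^{\alpha + (1-\alpha)\frac{3+3r}{3+4r}})$.

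Fix $t$ large enough that the $\land(t-1)$ truncations are inactive (the remaining $t$ are $O(1)$ in number and contribute $O(1)$). Since $\hat{f}_{t}(Z_{t}) = \phi(Z_{t}, Z_{t-k_{t}}, \ldots, Z_{t-n_{t}k_{t}})$ for a fixed measurable $\phi : \Z^{n_{t}+1}\to[0,1]$ (the empirical minimizer map, composed with evaluation), and since the indices $t-n_{t}k_{t} < \cdots < t-k_{t} < t$ all have consecutive gaps equal to $k_{t}$, iterating the definition of $\beta_{k}$ — using also that $k \mapsto \beta_{k}$ is nonincreasing — shows that the joint law of $(Z_{t}, Z_{t-k_{t}}, \ldots, Z_{t-n_{t}k_{t}})$ is within total variation distance $n_{t}\beta_{k_{t}}$ of the product measure $\mu_{t} := P_{t}\otimes P_{t-k_{t}}\otimes\cdots\otimes P_{t-n_{t}k_{t}}$. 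As $\phi \in [0,1]$, this yields
\begin{equation*}
\E\!\left[\hat{f}_{t}(Z_{t})\right] \;\le\; \int \phi \, d\mu_{t} \;+\; n_{t}\beta_{k_{t}} \;=\; \E_{\mu_{t}}\!\left[\, \E_{Z\sim P_{t}}[\hat{f}_{t}(Z)] \,\right] + n_{t}\beta_{k_{t}},
\end{equation*}
and under $\mu_{t}$ the test point $Z_{t}$ is independent of the training subsample, which is itself mutually independent (though not identically distributed).

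It remains to control $\E_{Z\sim P_{t}}[\hat{f}_{t}(Z)]$ under $\mu_{t}$. Write $\bar{\E}[f] = \frac{1}{n_{t}}\sum_{i=1}^{n_{t}} \E_{Z\sim P_{t-ik_{t}}}[f(Z)]$ and $\hat{\E}[f] = \frac{1}{n_{t}}\sum_{i=1}^{n_{t}} f(Z_{t-ik_{t}})$. The triangle inequality for $\disc$ together with \eqref{eqn:drift-defn} gives $|\bar{\E}[f] - \E_{P_{t}}[f]| \le D_{t}$ for all $f \in \F$, where $D_{t} := \sum_{u=t-m_{t}+1}^{t} \drift_{u}$ (using $n_{t}k_{t}\le m_{t}$). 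Since $\F$ is a VC subgraph class with pseudo-dimension $\vc$ and all its members take values in $[0,1]$, the standard symmetrization/chaining bound, applied to the independent sample under $\mu_{t}$, gives $\E_{\mu_{t}}[\sup_{f\in\F} |\hat{\E}[f] - \bar{\E}[f]|] \le \eta_{t}$ with $\eta_{t} = C\sqrt{\vc/n_{t}}$ for a universal constant $C$. Letting $f^{*}_{t} \in \argmin_{f\in\F}\E_{P_{t}}[f]$ (a near-minimizer suffices in general) and using $\hat{\E}[\hat{f}_{t}] \le \hat{\E}[f^{*}_{t}]$,
\begin{align*}
\E_{\mu_{t}}\!\left[\E_{P_{t}}[\hat{f}_{t}]\right]
&\le \E_{\mu_{t}}\!\left[\bar{\E}[\hat{f}_{t}]\right] + D_{t}
\;\le\; \E_{\mu_{t}}\!\left[\hat{\E}[\hat{f}_{t}]\right] + \eta_{t} + D_{t} \\
&\le \E_{\mu_{t}}\!\left[\hat{\E}[f^{*}_{t}]\right] + \eta_{t} + D_{t}
\;=\; \bar{\E}[f^{*}_{t}] + \eta_{t} + D_{t}
\;\le\; \E_{P_{t}}[f^{*}_{t}] + 2D_{t} + \eta_{t},
\end{align*}
so combining with the previous display, $\er_{t} \le 2D_{t} + \eta_{t} + n_{t}\beta_{k_{t}}$.

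Summing over $t \le T$: for the drift term, exchanging the order of summation and using $m_{t} = O(t^{(1-\alpha)\frac{3+2r}{3+4r}})$ shows each $\drift_{u}$ is counted in $D_{t}$ for $O(u^{(1-\alpha)\frac{3+2r}{3+4r}})$ indices $t$, so by \eqref{eqn:drift-rate} one gets $\sum_{t\le T}D_{t} = O(T^{\alpha + (1-\alpha)\frac{3+2r}{3+4r}})$, which is of strictly smaller order than the target. For the remaining terms, the definitions give $n_{t} \asymp t^{(1-\alpha)\frac{2r}{3+4r}}$ and $k_{t} \asymp t^{(1-\alpha)\frac{3}{3+4r}}$, so both $\eta_{t} = C\sqrt{\vc/n_{t}}$ and $n_{t}\beta_{k_{t}} = O(n_{t}k_{t}^{-r})$ are $O(t^{-(1-\alpha)\frac{r}{3+4r}})$, whence $\sum_{t\le T}(\eta_{t} + n_{t}\beta_{k_{t}}) = O(T^{\,1 - (1-\alpha)\frac{r}{3+4r}})$; the elementary identity $1 - (1-\alpha)\frac{r}{3+4r} = \alpha + (1-\alpha)\frac{3+3r}{3+4r}$ then gives the stated bound. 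I expect the decoupling step to be the main obstacle: one must set up the blocking lemma so as to \emph{simultaneously} detach the test point $Z_{t}$ from the thinned training subsample and render that subsample approximately mutually independent, so that an ordinary (independent, not identically distributed) uniform-convergence bound applies; the rest is drift bookkeeping and verifying the exponent arithmetic that the choices of $k_{t}$ and $m_{t}$ are engineered to balance.
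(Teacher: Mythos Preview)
Your proposal is correct and follows essentially the same strategy as the paper: decouple via $\beta$-mixing, apply the independent (non-i.i.d.) uniform-convergence bound for VC subgraph classes, absorb the drift via $\disc$, and sum the resulting per-step bound over $t$. Two places where your execution is slightly cleaner than the paper's: you apply the blocking lemma once to the full tuple $(Z_{t},Z_{t-k_t},\ldots,Z_{t-n_tk_t})$, whereas the paper first detaches $Z_t$ from $\hat f_t$ (cost $\beta_{k_t}$) and then separately decouples the training subsample (cost $(\lfloor m_t/k_t\rfloor-1)\beta_{k_t}$), arriving at the same $\lfloor m_t/k_t\rfloor\beta_{k_t}$; and your direct counting argument for $\sum_t D_t$ (each $\drift_u$ appears in $O(u^{(1-\alpha)(3+2r)/(3+4r)})$ windows, then Abel-sum against \eqref{eqn:drift-rate}) yields the exponent $\alpha+(1-\alpha)\frac{3+2r}{3+4r}$, which is strictly smaller than the paper's $\alpha+(1-\alpha)\frac{3+3r}{3+4r}$ obtained via a thresholding trick---though this does not improve the final rate, since the uniform-convergence and mixing terms dominate.
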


In particular, note that the expression on the right hand side grows \emph{sublinearly} in $T$.
To prove this theorem, we first provide two key lemmas from the literature,
after which we present the proof of Theorem~\ref{thm:main} below.
Following this, in Section~\ref{sec:product}, 
we conclude the paper by establishing \emph{finite-sample} bounds,
and other specialized results, in the special case of \emph{product} processes; this 
effectively extends to the general learning setting results established by \cite{barve:96,barve:97}
for binary classification, while also expressing the results in a more general 
form that allows for a time-varying drift rate.

\section{Proof of Theorem~\ref{thm:main}}
\label{sec:main-proof}

The following lemma is a well-known result on $\beta$-mixing processes, from \cite{volkonskii:59,eberlein:84} 
(see also Theorem 2.1 of \cite{vidyasagar:03} or Corollary 2.7 of \cite{yu:94}).
%immediately follows from a result of \cite{yu:94} (namely, Lemma 4.1, slightly 
%strengthened to hold for nonstationary sequences as well, then specialized to functions $h$ that depend only 
%on one argument from each block).
\begin{lemma}
\label{lem:yu-dep-to-indep}
For any $t,n,k \in \nats$,
\begin{equation*}
\left\| \P_{\{Z_{(j-1)k+t}\}_{j=1}^{n}} - \left( \times_{j=1}^{n} P_{(j-1)k+t} \right) \right\| \leq (n-1) \beta_{k}.
\end{equation*}
\end{lemma}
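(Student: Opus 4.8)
The plan is to reduce the joint-versus-product total-variation bound to $n-1$ applications of the mixing coefficient via a telescoping ``hybrid'' argument. Write $X_{j} = Z_{(j-1)k+t}$ for $j = 1,\ldots,n$, so that the goal is to bound $\left\| \P_{(X_{1},\ldots,X_{n})} - \times_{j=1}^{n} P_{X_{j}} \right\|$. First I would record two elementary facts about total variation: (i) the triangle inequality, and (ii) that for any probability measure $\lambda$ and any two measures $\nu,\nu'$ on a second space, $\left\| \lambda \times \nu - \lambda \times \nu' \right\| = \left\| \nu - \nu' \right\|$ (integrate out the common factor $\lambda$, which has total mass one). I would also invoke the standard reformulation of the mixing coefficient: the partition-based definition of $\beta_{k}$ coincides with the total-variation distance $\sup_{\ell \geq 1} \left\| \P_{(Z_{1},\ldots,Z_{\ell}),\,(Z_{\ell+k},Z_{\ell+k+1},\ldots)} - \P_{(Z_{1},\ldots,Z_{\ell})} \times \P_{(Z_{\ell+k},Z_{\ell+k+1},\ldots)} \right\|$ between the joint law of the ``past up to $\ell$'' and the ``future from $\ell+k$'' and the product of their marginals.

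Next I would introduce the hybrid measures interpolating between the full joint law and the full product. For $i = 1,\ldots,n$ set
\begin{equation*}
\mu^{(i)} = \left(\times_{j=1}^{i-1} P_{X_{j}}\right) \times \P_{(X_{i},X_{i+1},\ldots,X_{n})},
\end{equation*}
so that $\mu^{(1)} = \P_{(X_{1},\ldots,X_{n})}$ is the true joint law and $\mu^{(n)} = \times_{j=1}^{n} P_{X_{j}}$ is the product of marginals. The triangle inequality then gives
\begin{equation*}
\left\| \P_{(X_{1},\ldots,X_{n})} - \times_{j=1}^{n} P_{X_{j}} \right\| \leq \sum_{i=1}^{n-1} \left\| \mu^{(i)} - \mu^{(i+1)} \right\|.
\end{equation*}

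The heart of the argument is to bound each summand by $\beta_{k}$. Comparing $\mu^{(i)}$ and $\mu^{(i+1)}$, both carry the common product factor $\lambda = \times_{j=1}^{i-1} P_{X_{j}}$, and they differ only in whether the block $(X_{i},\ldots,X_{n})$ is given its true joint law or is split as $P_{X_{i}} \times \P_{(X_{i+1},\ldots,X_{n})}$. Factoring out $\lambda$ via fact (ii),
\begin{equation*}
\left\| \mu^{(i)} - \mu^{(i+1)} \right\| = \left\| \P_{(X_{i},X_{i+1},\ldots,X_{n})} - P_{X_{i}} \times \P_{(X_{i+1},\ldots,X_{n})} \right\|.
\end{equation*}
Now $X_{i} = Z_{\ell}$ with $\ell = (i-1)k+t$, so $X_{i}$ is $\sigma(\{Z_{1},\ldots,Z_{\ell}\})$-measurable, while every index appearing in $(X_{i+1},\ldots,X_{n})$ is at least $ik+t = \ell+k$, so that block is $\sigma(\{Z_{\ell+k},Z_{\ell+k+1},\ldots\})$-measurable. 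Since total variation cannot increase under the measurable coordinate maps sending the full past and full future onto $X_{i}$ and $(X_{i+1},\ldots,X_{n})$ respectively (the data-processing inequality), the right-hand side is at most the mixing distance at split $\ell$ with gap $k$, hence at most $\beta_{k}$. Summing the $n-1$ terms yields the claimed bound $(n-1)\beta_{k}$.

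The step I expect to require the most care is the reformulation linking the partition-based definition of $\beta_{k}$ to an honest total-variation distance between joint and product laws, together with the data-processing reduction from the full past and future $\sigma$-algebras down to the single coordinate $X_{i}$ and the tail block $(X_{i+1},\ldots,X_{n})$; once these are in place the telescoping is mechanical. Everything else---the triangle inequality and the collapse of the common factor $\lambda$---is elementary. It is also worth noting that, since the only consecutive coordinates ever compared are separated by exactly a time gap of $k$, the argument uses $\beta_{k}$ (and no smaller-gap coefficient), which is precisely why the subsampling spacing $k$ governs the bound.
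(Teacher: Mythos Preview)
Your argument is correct and is essentially the standard proof of this fact (as found, e.g., in \cite{yu:94}). Note, however, that the paper does not actually supply its own proof of this lemma: it is stated as a well-known result and attributed to \cite{volkonskii:59,eberlein:84}, with pointers to \cite{vidyasagar:03} and \cite{yu:94}. So there is nothing to compare against beyond the observation that your hybrid/telescoping argument, together with the identification of the partition definition of $\beta_{k}$ with the total-variation distance between joint and product laws and the data-processing inequality, is precisely the classical route used in those references.
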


Additionally, we use the following well-known result 
%of \cite{talagrand:94} (which refines earlier results of \cite{vapnik:71,haussler:92}).
(see e.g., \cite{van-der-Vaart:96}, Theorems 2.14.1 and 2.6.7). 
%and the remark above it, which follows from Theorem 2.6.7) %, or Theorem 1.3 of \cite{talagrand:94}
%%%
%%% I think I've managed to trace the origin back to the original Donsker class analyses of Koltchinskii (1981) and Pollard (1982), 
%%% as those are credited with originating the Rademacher complexity symmetrization technique and linking it to the entropy integral, 
%%% but I really didn't wade deeply enough into those papers to really verify this fact (plus Koltchinskii's article is in Russian).

\begin{lemma}
\label{lem:uniform-convergence}
There exists a universal constant $c \in [1,\infty)$ such that, 
for any independent $\Z$-valued random variables $Z_{1}^{\prime},\ldots,Z_{m}^{\prime},$
\begin{equation*}
\E\left[ \sup_{f \in \F} \left| \frac{1}{m} \sum_{t=1}^{m} \left(f(Z_{t}^{\prime}) - \E[f(Z_{t}^{\prime})]\right) \right| \right]
\leq c \sqrt{\frac{\vc}{m}}.
\end{equation*}
%where $\Log(x) = \ln(\max\{x,e\})$.
\end{lemma}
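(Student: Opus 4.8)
The plan is to establish this distribution-free inequality by the classical two-step route of empirical process theory: symmetrization to pass to a Rademacher average, followed by a chaining bound controlled through the covering numbers of $\F$. The crucial structural observation is that the non-identical distribution of the $Z_{t}^{\prime}$ causes no difficulty at any stage, because every step uses only independence. First I would symmetrize: introduce an independent ghost sample $Z_{1}^{\prime\prime},\ldots,Z_{m}^{\prime\prime}$, with each $Z_{t}^{\prime\prime}$ having the same marginal as $Z_{t}^{\prime}$, together with i.i.d. Rademacher signs $\sigma_{1},\ldots,\sigma_{m}$. Since the $Z_{t}^{\prime}$ are independent (identical distributions are never needed here, only that the centering term equals $\E[f(Z_{t}^{\prime})]$), the standard argument gives
\begin{equation*}
\E\left[ \sup_{f \in \F} \left| \frac{1}{m} \sum_{t=1}^{m} \left(f(Z_{t}^{\prime}) - \E[f(Z_{t}^{\prime})]\right) \right| \right]
\leq 2\, \E\left[ \sup_{f \in \F} \left| \frac{1}{m} \sum_{t=1}^{m} \sigma_{t} f(Z_{t}^{\prime}) \right| \right],
\end{equation*}
reducing the problem to bounding the expected Rademacher complexity of $\F$ on the sample.

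Conditioning on the realized values $z_{1}^{\prime},\ldots,z_{m}^{\prime}$, the inner expectation over $\sigma$ depends on the data only through the empirical measure $\hat{Q} = \frac{1}{m}\sum_{t=1}^{m}\delta_{z_{t}^{\prime}}$, so from here on the (possibly differing) marginals are irrelevant. The crude approach — covering $\F$ at a single scale $\epsilon \sim 1/m$ in $L_{2}(\hat{Q})$, whose cardinality is at most $(Cm)^{c\vc}$, and then combining a union bound with Hoeffding's sub-Gaussian tail for $\tfrac{1}{m}\sum_{t}\sigma_{t}f(z_{t}^{\prime})$ over each element of the cover — already yields a bound of order $\sqrt{(\vc/m)\log m}$, but carries a spurious logarithmic factor. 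To obtain the sharp $\sqrt{\vc/m}$ asserted in the lemma I would instead apply Dudley's chaining bound to the conditional Rademacher process, controlling it by the entropy integral $\frac{C}{\sqrt{m}}\int_{0}^{1}\sqrt{\log N(\epsilon,\F,L_{2}(\hat{Q}))}\,d\epsilon$, where the upper limit is the diameter of $\F$, at most $1$ since $\F$ maps into $[0,1]$. The key input is Haussler's covering-number bound for VC subgraph classes of pseudo-dimension $\vc$, namely $N(\epsilon,\F,L_{2}(Q)) \leq (C/\epsilon)^{c\vc}$ uniformly over all probability measures $Q$ \cite{haussler:92}, which applies in particular to $\hat{Q}$. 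The integrand then behaves like $\sqrt{\vc\log(1/\epsilon)}$, so the integral converges to $O(\sqrt{\vc})$ with no dependence on $m$, and dividing by $\sqrt{m}$ delivers the claimed $\sqrt{\vc/m}$.

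The main obstacle is precisely the removal of the logarithmic factor that the naive single-scale union bound introduces; this is the content of the refinement of \cite{talagrand:94} over \cite{vapnik:71,haussler:92}. Two ingredients carry it out. First, one needs the \emph{scale-dependent} uniform entropy bound above, rather than a single cover at resolution $1/m$; establishing this bound is itself a nontrivial combinatorial fact about VC subgraph classes. Second, one needs a chaining argument sharp enough that the error accumulated across scales telescopes to a convergent integral rather than to a sum of $\Theta(\log m)$ comparable terms. Since the lemma is quoted from the literature, in practice I would invoke \cite{talagrand:94} directly for the sharp universal constant $c$ and the rate, having verified only that symmetrization and the uniform covering estimate are both insensitive to the lack of identical distributions among the $Z_{t}^{\prime}$.
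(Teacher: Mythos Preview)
Your proposal is correct and follows essentially the same route as the paper's own proof: symmetrization with a ghost sample and Rademacher signs (identical to the paper), followed by a chaining bound on the conditional Rademacher process controlled via Haussler's uniform entropy estimate for VC subgraph classes. The only cosmetic differences are that the paper invokes the discrete form of Dudley's bound (Massart, Lemma~6.1) as a dyadic sum $\sum_{j\geq 0}2^{-j}\sqrt{\ln\mathcal{M}(2^{-j-1},\F,L_{2}(P_m'))}$ rather than the continuous entropy integral, and passes through $L_{1}$ packing numbers via the inequality $\mathcal{M}(\delta,\F,L_{2})\leq\mathcal{M}(\delta^{2},\F,L_{1})$ before applying Haussler's bound, whereas you cite the $L_{2}$ covering-number version directly; both paths yield the same $O(\sqrt{\vc})$ integral and hence the claimed $c\sqrt{\vc/m}$.
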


%Specifically, this result is obtained by integrating the tail bound in Theorem 1.3 of \cite{talagrand:94} (see also \cite{van-der-Vaart:96}, Theorem 2.14.9); %%% see my notes in the margin of Talagrand's paper.
%see e.g., \cite{van-der-Vaart:96}, for verification that VC subgraph classes indeed satisfy the conditions on $\F$ in the more-general original statement of \cite{talagrand:94}.
%%% see my notes in the margins of boucheron:13 for the following comment.
While the %original 
proof of this result %by \cite{talagrand:94} discussed 
in \cite{van-der-Vaart:96} 
discusses only i.i.d. random variables,
the proof in fact implies this result, which only assumes independence.  For completeness, we 
include a brief proof in %the supplemental material.
Appendix~\ref{app:uniform-convergence}.

With these lemmas in hand, we are ready to present the proof of Theorem~\ref{thm:main}.

\begin{proof}[Proof of Theorem~\ref{thm:main}]
Let $Z_{1}^{\prime},Z_{2}^{\prime},\ldots$ denote a sequence of independent random variables,
also independent from $\{Z_{i}\}_{i\in\nats}$,
and with each $Z_{i}^{\prime} \sim P_{i}$.
Fix any $t \in \nats \setminus \{1\}$.
Since $\hat{f}_{t}$ depends only on $Z_{1},\ldots,Z_{t-k_{t}}$, 
it follows immediately from the definition of $\beta_{k_{t}}$ (see \cite{yu:94}, Lemma 2.6) that
\begin{equation*}
\left\| \P_{(\hat{f}_{t},Z_{t})} - \P_{(\hat{f}_{t},Z_{t}^{\prime})} \right\|
= \left\| \P_{(\hat{f}_{t},Z_{t})} - \P_{\hat{f}_{t}} \times \P_{Z_{t}} \right\|
\leq \beta_{k_{t}}.
\end{equation*}
In particular, this implies
\begin{equation*}
\E\left[ \hat{f}_{t}(Z_{t}) \right]
\leq \E\left[ \hat{f}_{t}(Z_{t}^{\prime}) \right] + \beta_{k_{t}}.
\end{equation*}

Additionally, since 
$\disc(P_{t-i k_{t}},P_{t}) \leq \sum_{q=t-i k_{t}}^{t-1} \change_{q+1}$ 
%\leq \sum_{q=t-m_{t}}^{t-1} \change_{q+1}$
for $1 \leq i \leq \lfloor m_{t}/k_{t} \rfloor$, and every $Z_{j}^{\prime}$ is independent of $\hat{f}_{t}$, we have that
\begin{align*}
\E\left[ \hat{f}_{t}(Z_{t}^{\prime}) \right]
& = \E\left[ \E\left[ \hat{f}_{t}(Z_{t}^{\prime}) \middle| \hat{f}_{t} \right] \right]
\\ & \leq \E\left[ \frac{1}{\lfloor m_{t}/k_{t} \rfloor} \sum_{i=1}^{\lfloor m_{t}/k_{t} \rfloor} \E\left[ \hat{f}_{t}(Z_{t-i k_{t}}^{\prime}) \middle| \hat{f}_{t} \right] \right] 
+ \frac{1}{\lfloor m_{t}/k_{t} \rfloor} \sum_{i=1}^{\lfloor m_{t}/k_{t} \rfloor} \sum_{q=t - i k_{t}}^{t-1} \change_{q+1}.
\end{align*}
Furthermore, 
\begin{align}
& \E\left[ \frac{1}{\lfloor m_{t}/k_{t} \rfloor} \sum_{i=1}^{\lfloor m_{t}/k_{t} \rfloor} \E\left[ \hat{f}_{t}(Z_{t-i k_{t}}^{\prime}) \middle| \hat{f}_{t} \right] \right] \notag
\\ & \leq 
\E\!\left[ \frac{1}{\lfloor m_{t}/k_{t} \rfloor} \sum_{i=1}^{\lfloor m_{t}/k_{t} \rfloor} \hat{f}_{t}(Z_{t-i k_{t}}) \right]
+ \E\!\left[ \sup_{f \in \F} \left| \frac{1}{\lfloor m_{t}/k_{t} \rfloor} \sum_{i=1}^{\lfloor m_{t}/k_{t} \rfloor} \left(\E[f(Z_{t-i k_{t}}^{\prime})] - f(Z_{t-i k_{t}}) \right) \right| \right]. \label{eqn:two-terms}
\end{align}
Now let us bound each term in \eqref{eqn:two-terms} separately.
First, we have that
\begin{align*}
& \E\left[ \frac{1}{\lfloor m_{t}/k_{t} \rfloor} \sum_{i=1}^{\lfloor m_{t}/k_{t} \rfloor} \hat{f}_{t}(Z_{t-i k_{t}}) \right]
= \E\left[ \inf_{f \in \F} \frac{1}{\lfloor m_{t}/k_{t} \rfloor} \sum_{i=1}^{\lfloor m_{t}/k_{t} \rfloor} f(Z_{t-i k_{t}}) \right]
\\ & \leq \inf_{f \in \F} \frac{1}{\lfloor m_{t}/k_{t} \rfloor} \sum_{i=1}^{\lfloor m_{t}/k_{t} \rfloor} \E\left[ f(Z_{t-i k_{t}}) \right]
\leq \inf_{f \in \F} \E[f(Z_{t})] + \frac{1}{\lfloor m_{t}/k_{t} \rfloor} \sum_{i=1}^{\lfloor m_{t}/k_{t} \rfloor} \sum_{q=t - i k_{t}}^{t-1} \change_{q+1}.
\end{align*}
Next, Lemma~\ref{lem:yu-dep-to-indep} implies
\begin{align*}
& \E\left[ \sup_{f \in \F} \left| \frac{1}{\lfloor m_{t}/k_{t} \rfloor} \sum_{i=1}^{\lfloor m_{t}/k_{t} \rfloor} \left(\E[f(Z_{t-i k_{t}}^{\prime})] - f(Z_{t-i k_{t}}) \right) \right| \right]
\\ & \leq \E\left[ \sup_{f \in \F} \left| \frac{1}{\lfloor m_{t}/k_{t} \rfloor} \sum_{i=1}^{\lfloor m_{t}/k_{t} \rfloor} \left(\E[f(Z_{t-i k_{t}}^{\prime})] - f(Z_{t-i k_{t}}^{\prime}) \right) \right| \right] + \left( \lfloor m_{t}/k_{t} \rfloor - 1 \right) \beta_{k_{t}}.
\end{align*}
Furthermore, Lemma~\ref{lem:uniform-convergence} implies
\begin{equation*}
\E\left[ \sup_{f \in \F} \left| \frac{1}{\lfloor m_{t}/k_{t} \rfloor} \sum_{i=1}^{\lfloor m_{t}/k_{t} \rfloor} \left(\E[f(Z_{t-i k_{t}}^{\prime})] - f(Z_{t-i k_{t}}^{\prime}) \right) \right| \right]
\leq c \sqrt{\frac{\vc}{\lfloor m_{t}/k_{t} \rfloor}}. 
\end{equation*}
Together, we have that \eqref{eqn:two-terms} is at most
\begin{equation*}
\inf_{f \in \F} \E[f(Z_{t})] + \left(\frac{1}{\lfloor m_{t}/k_{t} \rfloor} \sum_{i=1}^{\lfloor m_{t}/k_{t} \rfloor} \sum_{q=t - i k_{t}}^{t-1} \change_{q+1}\right)
+ c \sqrt{\frac{\vc}{\lfloor m_{t}/k_{t} \rfloor}}
+ \left(\lfloor m_{t}/k_{t} \rfloor - 1\right)\beta_{k_{t}}.
\end{equation*}

Altogether, %combined with monotonicity of $\beta_{k}$, 
we have established that
\begin{equation}
\label{eqn:the-bound-term}
\E\left[ \hat{f}_{t}(Z_{t}) \right]
\leq \inf_{f \in \F} \E[f(Z_{t})] 
+ 2 \left(\frac{1}{\lfloor m_{t}/k_{t} \rfloor} \sum_{i=1}^{\lfloor m_{t}/k_{t} \rfloor} \sum_{q=t - i k_{t}}^{t-1} \change_{q+1}\right)
+ c \sqrt{\frac{\vc}{\lfloor m_{t}/k_{t} \rfloor}}
+ \lfloor m_{t}/k_{t} \rfloor \beta_{k_{t}}.
\end{equation}
Therefore, 
\begin{multline}
\label{eqn:the-bound}
\sum_{t=1}^{T} \E\left[ \hat{f}_{t}(Z_{t}) \right] - \sum_{t=1}^{T} \inf_{f \in \F} \E\left[ f(Z_{t}) \right] 
\\ \leq 1 + \left(\sum_{t=2}^{T} \frac{2}{\lfloor m_{t}/k_{t} \rfloor} \sum_{i=1}^{\lfloor m_{t}/k_{t} \rfloor} \sum_{q=t - i k_{t}}^{t-1} \change_{q+1} \right)
+ \left(\sum_{t=2}^{T} c \sqrt{\frac{\vc}{\lfloor m_{t}/k_{t} \rfloor}}\right)
+ \left(\sum_{t=2}^{T} \lfloor m_{t}/k_{t} \rfloor \beta_{k_{t}}\right).
\end{multline}

All that remains is to bound each of these three terms on the right hand side of \eqref{eqn:the-bound}.
The only term presenting a challenge in this regard is the term involving the $\change_{q+1}$ values, 
and for that reason we leave this term for last.
For the other terms, first note that
\begin{equation*}
\sum_{t=1}^{T} t^{-(1-\alpha)\frac{r}{3+3r}}
= O\left( 1 + \int_{1}^{T} t^{-(1-\alpha)\frac{r}{3+3r}} {\rm d}t \right)
%= O\left( T^{1-(1-\alpha)\frac{r}{3+3r}} \right)
%= O\left( T^{\frac{3 + 3r - r + \alpha r}{3+3r}} \right)
= O\left( T^{\frac{3+(2+\alpha)r}{3+3r}} \right).
\end{equation*}
Thus, we have that
\begin{equation}
\label{eqn:the-bound-term-1}
\sum_{t=2}^{T} c \sqrt{\frac{\vc}{\lfloor m_{t}/k_{t} \rfloor}}
= O\left( \sum_{t=1}^{T} t^{-(1-\alpha)\frac{r}{3+3r}} \right)
%\\ & \leq O\left( \sum_{t=1}^{T} t^{-(1-\alpha)\frac{r}{3+3r}} \right)
= O\left( T^{\frac{3+(2+\alpha)r}{3+3r}} \right).
\end{equation}
Also, we have
\begin{equation}
\label{eqn:the-bound-term-2}
\sum_{t=2}^{T} \lfloor m_{t}/k_{t} \rfloor \beta_{k_{t}}
= O\left( \sum_{t=2}^{T} m_{t} / k_{t}^{1+r} \right)
%= O\left( \sum_{t=1}^{T} t^{(1-\alpha)\frac{3+2r - 3 - 3r}{3+3r}} \right)
= O\left( \sum_{t=1}^{T} t^{-(1-\alpha)\frac{r}{3+3r}} \right)
= O\left( T^{\frac{3+(2+\alpha)r}{3+3r}} \right).
\end{equation}

The remaining term, $\sum_{t=2}^{T} \frac{2}{\lfloor m_{t}/k_{t} \rfloor} \sum_{i=1}^{\lfloor m_{t}/k_{t} \rfloor} \sum_{q=t - i k_{t}}^{t-1} \change_{q+1}$, 
requires more work to bound.
First note that 
\begin{equation*}
\sum_{t=2}^{T} \frac{2}{\lfloor m_{t}/k_{t} \rfloor} \sum_{i=1}^{\lfloor m_{t}/k_{t} \rfloor} \sum_{q=t - i k_{t}}^{t-1} \change_{q+1}
\leq 2 \sum_{t=2}^{T} \sum_{q = t - m_{t}}^{t-1} \change_{q+1}.
\end{equation*}
We will focus on bounding the right hand side.
%Now note that, for any given $q \in \nats$, the smallest value of $t$ for which $q \in \{t-m_{t},\ldots,t-1\}$ is $t = q+1$.
%Furthermore, 
Now note that every value of $t \in \nats$ for which $q \in \{t-m_{t},\ldots,t-1\}$ 
satisfies 
\begin{align*}
2q 
\geq 2t - 2m_{t} 
& = 2t \left( 1 - \frac{m_{t}}{t} \right) 
%\geq 2t \left( 1 - \frac{m_{t}}{t-1} \right)  %%% q \leq t-1 implies t \geq 2, so dividing by t-1 is okay.
\geq 2t \left( 1 - 2 (t-1)^{(1-\alpha)\frac{3+2r}{3+3r} - 1} \right) 
%= 2t \left( 1 - 2 (t-1)^{ \frac{3 + 2r - 3\alpha - 2r\alpha - 3 - 3r}{3+3r}} \right)
\\ & = 2t \left( 1 - 2 (t-1)^{ - \frac{3\alpha + 2r\alpha + r}{3+3r}} \right)
\geq 2t \left( 1 - 2 q^{ - \frac{3\alpha + 2r\alpha + r}{3+3r}} \right)
\geq 2t \left( 1 - 2 q^{ - \frac{r}{3+3r}} \right).
\end{align*}
Denote $q_{r} = \left\lceil 4^{\frac{3+3r}{r}} \right\rceil$, 
and note that for any $q \geq q_{r}$ we have $2t \left( 1 - 2 q^{ - \frac{r}{3+3r}} \right) \geq t$.
Thus, for any $q \geq q_{r}$, every $t \in \nats$ with $q \in \{t-m_{t},\ldots,t-1\}$ has $t \leq 2q$, 
so that (by monotonicity of $m_{t}$) we also have $q \in \{t - m_{2q}, \ldots, t-1\}$, 
or equivalently $t \in \{q+1, \ldots, q+m_{2q}\}$.  In particular, this means any such $q$ has at most $m_{2q}$ 
appearances of the quantity $\change_{q+1}$ in the summation $\sum_{t=2}^{T} \sum_{q=t-m_{t}}^{t-1} \change_{q+1}$.
Also, clearly the largest $q$ with $\change_{q+1}$ appearing in this summation is $q=T-1$.
Additionally, since $m_{t}$ is sublinear in $t$, we have $t-m_{t} \to \infty$ as $t \to \infty$, 
so that there is some finite $t_{0}$ such that every $t > t_{0}$ has $t-m_{t} \geq q_{r}$.
Thus, every $q < q_{r}$ has $\change_{q+1}$ appearing at most $t_{0}$ times in the summation 
$\sum_{t=2}^{T} \sum_{q=t-m_{t}}^{t-1} \change_{q+1}$.
Altogether, we have that 
\begin{align*}
2 \sum_{t=2}^{T} \sum_{q=t-m_{t}}^{t-1} \change_{q+1}
& \leq 2 t_{0} \sum_{q=1}^{q_{r} - 1} \change_{q+1} + 2 \sum_{q=q_{r}}^{T-1} m_{2q} \change_{q+1}
%= O\!\left( \sum_{q=1}^{T} m_{2q} \change_{q+1} \right) 
\\ & = O\!\left( m_{2T} \sum_{q=1}^{T} \change_{q} \right)
= O\!\left( T^{(1-\alpha) \frac{3+2r}{3+3r} + \alpha} \right)
%= O\!\left( T^{\frac{3+2r - 3\alpha - 2r\alpha + 3\alpha + 3r\alpha}{3+3r}} \right) 
= O\!\left( T^{\frac{3+(2+\alpha)r}{3+3r}} \right), 
\end{align*}
where we have used the assumption \eqref{eqn:drift-rate} on the $\change_{t}$ sequence.

Plugging this bound into \eqref{eqn:the-bound} along with \eqref{eqn:the-bound-term-1} and \eqref{eqn:the-bound-term-2},
we have established that
\begin{equation*}
\sum_{t=1}^{T} \E\left[ \hat{f}_{t}(Z_{t}) \right] - \sum_{t=1}^{T} \inf_{f \in \F} \E\left[ f(Z_{t}) \right] 
= O\!\left( T^{\frac{3+(2+\alpha)r}{3+3r}} \right),
\end{equation*}
which completes the proof.
\end{proof}

\section{Product Processes}
\label{sec:product}

In this section, unlike above, we suppose the algorithm has direct access to the $\change_{t}$ sequence.
%\footnote{In fact, for our purposes here, it would suffice to merely have access 
%to values $\change_{t}^{\prime}$ such that $\change_{t} \leq \change_{t}^{\prime}$ for all $t$, in which case the learning method and resulting performance guarantee
%would be expressed with these values $\change_{t}^{\prime}$ in place of $\change_{t}$.}
%In this section, we suppose that the constant factor in the $O(T^{\alpha})$ expression in \eqref{eqn:drift-rate}
%is \emph{universal}: that is, $\sum_{t=1}^{T} \change_{t} \lesssim T^{\alpha}$.
%In particular, we suppose this constant factor does not depend on $\vc$.
Our objective is then to derive more-explicit (non-asymptotic) bounds under the assumption that $\{Z_{t}\}_{t=1}^{\infty}$ is a product process.
The results here are already known in the special case of binary classification, 
in the case that $\change_{t}$ is bounded by a $t$-invariant \emph{constant} for all $t$ \cite{barve:97}.
Thus, this section represents a generalization of these classic results to the general learning setting, 
and to general time-varying drift rates.  That said, we note that the results here would also readily follow
from the classic analysis of \cite{barve:97} and the more-recent work of \cite{mohri:12}, with only minor
additional work to apply those results to a recent history of data points trailing the prediction time $t$; 
there is nevertheless some value in stating the results explicitly here, particularly since they follow directly 
from our analysis above.

Throughout this section, %we let $\Log(x) = \ln(x \lor e)$.
%Additionally, 
for any functions $f,g : A \to [0,\infty)$, for any set $A$, 
we write $f(a) \lesssim g(a)$ to express the claim that there exists a numerical constant $c \in (0,\infty)$
such that $f(a) \leq c g(a)$ for all $a \in A$; this allows us to express non-asymptotic bounds (in terms of $T$, $\vc$, and the $\change_{t}$ sequence), 
without concerning ourselves with precise numerical constant factors.
%
%%%here : see if I can get by with just expectations, so that I don't need the \Log(m/\vc) factor in the complexity term.
For each $t \in \nats \setminus \{1\}$, define
\[
%\tilde{m}_{t} = \argmin_{m \in \{1,\ldots,t-1\}} \frac{1}{m} \sum_{s=t-m}^{t-1} \sum_{r=s+1}^{t} \change_{r} + \sqrt{\frac{\vc\Log(m/\vc)}{m}}
\tilde{m}_{t} = \argmin_{m \in \{1,\ldots,t-1\}} \left( \sum_{q=t-m}^{t-1} \change_{q+1} + \sqrt{\frac{\vc}{m}} \right)
\]
and
\[
\tilde{f}_{t} = \argmin_{f \in \F} \sum_{s=t-\tilde{m}_{t}}^{t-1} f(Z_{s}).
\]
For completeness, define $\tilde{f}_{1}$ as an arbitrary element of $\F$.

\begin{theorem}
\label{thm:product-general}
If $\{Z_{t}\}_{t=1}^{\infty}$ is a product process, then for $T \in \nats \setminus \{1\}$, 
\begin{equation*}
\sum_{t=1}^{T} \E\left[ \tilde{f}_{t}(Z_{t}) \right] 
- \sum_{t=1}^{T} \inf_{f \in \F} \E[ f(Z_{t}) ] 
\lesssim \sum_{t=2}^{T} \min_{m \in \{1,\ldots,t-1\}} \left( \sum_{q=t-m}^{t-1} \change_{q+1} + \sqrt{\frac{\vc}{m}} \right).
\end{equation*}
\end{theorem}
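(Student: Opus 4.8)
The plan is to run the argument of the proof of Theorem~\ref{thm:main}, specialized and simplified by the product assumption: there is no $\beta$-mixing error term (the data in any window are genuinely independent, so Lemma~\ref{lem:yu-dep-to-indep} is not needed), and the window length is chosen adaptively as the minimizer of the bias--variance tradeoff, so no final optimization over $T$ is required. Fix $t \in \nats \setminus \{1\}$ and abbreviate $m = \tilde{m}_{t}$ and $D_{t} = \sum_{q=t-m}^{t-1} \change_{q+1}$. First I would record the elementary consequences of the hypotheses: by the triangle inequality for $\disc$ together with \eqref{eqn:drift-defn}, $\disc(P_{s},P_{t}) \le D_{t}$ for every $s \in \{t-m,\ldots,t-1\}$; and since $\{Z_{t}\}_{t}$ is a product process, $Z_{t}$ is independent of $(Z_{t-m},\ldots,Z_{t-1})$, hence of $\tilde{f}_{t}$.

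Next I would condition on the window $W = (Z_{t-m},\ldots,Z_{t-1})$, which determines $\tilde{f}_{t}$. Independence of $Z_{t}$ from $W$ gives $\E[\tilde{f}_{t}(Z_{t}) \mid W] = \int \tilde{f}_{t}\,{\rm d}P_{t}$, and since $\tilde{f}_{t} \in \F$ the discrepancy bound yields $\int \tilde{f}_{t}\,{\rm d}P_{t} \le \frac{1}{m}\sum_{s=t-m}^{t-1} \int \tilde{f}_{t}\,{\rm d}P_{s} + D_{t}$. Passing from these true averages to the empirical average $\frac{1}{m}\sum_{s=t-m}^{t-1}\tilde{f}_{t}(Z_{s})$ introduces the uniform deviation $\sup_{f \in \F}\bigl| \frac{1}{m}\sum_{s=t-m}^{t-1}(\E[f(Z_{s})] - f(Z_{s})) \bigr|$, after which the defining property of $\tilde{f}_{t}$ as the empirical minimizer lets me replace $\frac{1}{m}\sum_{s}\tilde{f}_{t}(Z_{s})$ by $\frac{1}{m}\sum_{s}f^{\star}(Z_{s})$ for $f^{\star} = \argmin_{f \in \F}\E[f(Z_{t})]$. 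Taking expectations, applying the discrepancy bound once more to $f^{\star}$, and invoking Lemma~\ref{lem:uniform-convergence} on the deviation term (legitimate since $Z_{t-m},\ldots,Z_{t-1}$ are independent), I obtain
\[
\E[\tilde{f}_{t}(Z_{t})] - \inf_{f \in \F}\E[f(Z_{t})] \le 2 D_{t} + c\sqrt{\vc/m}.
\]
Since $m = \tilde{m}_{t}$ minimizes $\sum_{q=t-m}^{t-1}\change_{q+1} + \sqrt{\vc/m}$ over $m \in \{1,\ldots,t-1\}$, the right-hand side is $\lesssim \min_{m \in \{1,\ldots,t-1\}}\bigl(\sum_{q=t-m}^{t-1}\change_{q+1} + \sqrt{\vc/m}\bigr)$.

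Summing over $t = 2,\ldots,T$ then gives the claim, up to the $t=1$ term $\E[\tilde{f}_{1}(Z_{1})] - \inf_{f}\E[f(Z_{1})] \le 1$, which is absorbed because $\vc \ge 1$ forces the $t=2$ summand to be at least $\sqrt{\vc} \ge 1$. I expect the argument to be essentially routine; the only delicate points are the bookkeeping around conditioning on $W$ while $\tilde{f}_{t}$ is itself a function-valued random variable, and recognizing that the optimistic bias of the empirical minimizer is exactly what forces the uniform-convergence term into the bound, so that it enters in precisely one place and nowhere else. As the authors note, one could alternatively deduce this from \cite{barve:97,mohri:12} applied to the trailing window of data.
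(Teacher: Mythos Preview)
Your proposal is correct and follows essentially the same approach as the paper. The paper's proof is simply more terse: it observes that the per-step bound \eqref{eqn:the-bound-term} derived in the proof of Theorem~\ref{thm:main} holds for arbitrary choices of $k_t \le m_t \le t-1$, substitutes $k_t = 1$ and $m_t = \tilde{m}_t$ (so that $\hat{f}_t = \tilde{f}_t$), and uses $\beta_1 = 0$ for product processes to drop the mixing term---exactly the decomposition you spell out directly.
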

\begin{proof}
We begin by noting that, in the proof of Theorem~\ref{thm:main}, 
the argument leading to \eqref{eqn:the-bound} in fact more generally
holds for \emph{any} $\beta$-mixing process $\{Z_{t}\}_{t \in \nats}$ (regardless of whether 
\eqref{eqn:drift-rate} and \eqref{eqn:beta-rate} are satisfied for the corresponding $\change_{t}$ and $\beta_{k}$ sequences),
and for \emph{any} sequence $\hat{f}_{t}$ defined as in \eqref{eqn:hatf-defn},
where the values $m_{t},k_{t} \in \nats$ can be specified \emph{arbitrarily},
subject to $k_{t} \leq m_{t} \leq t-1$.  In particular, substituting $k_{t} = 1$ and $m_{t} = \tilde{m}_{t}$,
the corresponding $\hat{f}_{t}$ from \eqref{eqn:hatf-defn} is precisely $\tilde{f}_{t}$.
Then since $\beta_{1} = 0$ for product processes, 
\eqref{eqn:the-bound} implies 
\begin{align*}
\sum_{t=1}^{T} \E\left[ \tilde{f}_{t}(Z_{t}) \right] - \sum_{t=1}^{T} \inf_{f \in \F} \E\left[ f(Z_{t}) \right]
& \lesssim \sum_{t=2}^{T} \left( \sum_{q=t-\tilde{m}_{t}}^{t-1} \change_{q+1} + \sqrt{\frac{\vc}{\tilde{m}_{t}}} \right)
\\ & = \sum_{t=2}^{T} \min_{m \in \{1,\ldots,t-1\}} \left( \sum_{q=t-m}^{t-1} \change_{q+1} + \sqrt{\frac{\vc}{m}} \right).
\end{align*}
\end{proof}

It remains an interesting open problem to determine whether the above guarantee is 
achievable by a learning rule that has no direct dependence on the $\change_{t}$ values:
that is, a method that is \emph{adaptive} to variations in the rates of drift.  Resolution
of this question seems an important step toward applicability of these ideas in practice.
Of course, as established in Theorem~\ref{thm:main}, if we instead assume that the asymptotic
bound \eqref{eqn:drift-rate} holds, then it is possible to replace the direct dependence on $\change_{t}$
with a mere dependence on a single parameter $\alpha$; however, the price for this is that
the finite-sample bound in Theorem~\ref{thm:product-general} would be replaced by an 
asymptotic guarantee.  An alternative option is to suppose the drift rates $\change_{t}$ are \emph{bounded}
by a value $\gamma$,
and then provide an algorithm depending only on $\gamma$; this coarse condition on $\change_{t}$
precludes the possibility of a sublinear cumulative excess risk guarantee, but it can nonetheless be interesting to study 
the dependence of the achieved excess risk on $\gamma$.  This is the subject of the next subsection.

\subsection{Constant Drift Rate}

In the context of binary classification, \cite{long:99,helmbold:91,helmbold:94,barve:96,barve:97,crammer:10,concept-drift} 
have derived bounds on the sequence of risks (or the number of mistakes) 
achieved by various methods, under the assumptions that $\{Z_{t}\}_{t=1}^{\infty}$
is a product process, and that $\change_{t} \leq \gamma$, for some fixed constant $\gamma \in (0,1)$.
Here we briefly note that some of these results (and in particular, those of \cite{barve:97}) can 
be generalized to the general learning setting, where we find analogous results on the average 
of the $\hat{f}_{t}(Z_{t})$ function values.
We note that a similar type of result can also be extracted from the analysis of \cite{mohri:12} 
with minor additional work to convert to our sequential setting.
%however, since it follows easily from our arguments above, we include it here as well.
%
%We begin by generalizing the results of \cite{phil} on learning with a constant drift rate.
%
%Specifically, suppose $\exists \gamma > 0$ such that, $\forall t \geq 2$, $\change_{t} \leq \gamma$.
%

Let $\bar{m} = \left\lceil \vc^{1/3} \gamma^{-2/3} \right\rceil$.
For each integer $t > \bar{m}$, let 
\begin{equation*}
\bar{f}_{t} = \argmin_{f \in \F} \sum_{s=t-\bar{m}}^{t-1} f(Z_{s}).
\end{equation*}
For completeness, for $t \leq \bar{m}$ define $\bar{f}_{t}$ as an arbitrary element of $\F$.

\begin{theorem}
\label{thm:product-bounded}
If $\{Z_{t}\}_{t=1}^{\infty}$ is a product process, then for $T > 1/\gamma$, 
\begin{equation*}
\sum_{t=1}^{T} \E\left[ \bar{f}_{t}(Z_{t}) \right] - \sum_{t=1}^{T} \inf_{f \in \F} \E[ f(Z_{t}) ]
\lesssim \left( \vc \gamma \right)^{1/3} T.
\end{equation*}
\end{theorem}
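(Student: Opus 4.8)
The plan is to derive the bound directly from the per-step estimate \eqref{eqn:the-bound-term} established inside the proof of Theorem~\ref{thm:main}. As observed at the start of the proof of Theorem~\ref{thm:product-general}, that estimate (and the whole chain up through \eqref{eqn:the-bound}) is valid for an \emph{arbitrary} process $\{Z_t\}_{t\in\nats}$ and for \emph{any} $\hat f_t$ of the form \eqref{eqn:hatf-defn}, provided only that $k_t \le m_t \le t-1$. So the first step is to specialize it with the constant window $k_t = 1$ and $m_t = \bar m$, which is legitimate exactly for $t > \bar m$, and for which the resulting $\hat f_t$ from \eqref{eqn:hatf-defn} is precisely $\bar f_t$. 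Since $\{Z_t\}_{t=1}^\infty$ is a product process we have $\beta_1 = 0$, so \eqref{eqn:the-bound-term} collapses to
\[
\E\left[\bar f_t(Z_t)\right] \le \inf_{f\in\F}\E[f(Z_t)] + 2\left(1 \land \sum_{q=t-\bar m}^{t-1}\change_{q+1}\right) + c\sqrt{\frac{\vc}{\bar m}}
\qquad\text{for all } t > \bar m.
\]

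The second step is to insert the hypothesis $\change_t \le \gamma$, giving $\sum_{q=t-\bar m}^{t-1}\change_{q+1} \le \bar m\gamma$, so that the $t$-th excess term is at most $2\bar m\gamma + c\sqrt{\vc/\bar m}$. The point of the definition $\bar m = \lceil \vc^{1/3}\gamma^{-2/3}\rceil$ is precisely to balance these two quantities: using $\vc \ge 1$ and $\gamma \le 1$ one has $\vc^{1/3}\gamma^{-2/3} \le \bar m \le 2\vc^{1/3}\gamma^{-2/3}$, and hence both $\bar m\gamma$ and $\sqrt{\vc/\bar m}$ are of order $(\vc\gamma)^{1/3}$. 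Consequently each term with $t > \bar m$ contributes $\lesssim (\vc\gamma)^{1/3}$, and summing over $t \le T$ gives a total of $\lesssim (\vc\gamma)^{1/3} T$ for this part.

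The last step is to absorb the initial block $t \le \bar m$, where $\bar f_t$ is arbitrary and we only have the trivial bound $\E[\bar f_t(Z_t)] - \inf_{f\in\F}\E[f(Z_t)] \le 1$; this block contributes at most $\bar m \le 2\vc^{1/3}\gamma^{-2/3}$. This is the one place where the assumption $T > 1/\gamma$ is used: it yields $(\vc\gamma)^{1/3}T > \vc^{1/3}\gamma^{-2/3} \ge \tfrac12\bar m$, so the initial block is also $\lesssim (\vc\gamma)^{1/3}T$, and adding the two contributions completes the argument. I do not expect a genuine obstacle here — the whole thing is bookkeeping on top of \eqref{eqn:the-bound-term} — the only care required is in handling these boundary effects (the arbitrary predictors for small $t$, and the possibility that $T$ is barely larger than $\bar m$), which is exactly what the condition $T > 1/\gamma$ is there to control.
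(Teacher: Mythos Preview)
Your proposal is correct and follows essentially the same approach as the paper's own proof: specialize \eqref{eqn:the-bound-term} with $k_t=1$ and $m_t=\bar m$ (valid for $t>\bar m$), use $\beta_1=0$ and $\change_t\le\gamma$, balance $\bar m\gamma$ against $\sqrt{\vc/\bar m}$ via the choice of $\bar m$, and absorb the initial $\bar m$ terms using $T>1/\gamma$. The paper's argument is identical in structure and detail.
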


It is worth noting that the bound in Theorem~\ref{thm:product-bounded} would also 
hold for the predictor $\tilde{f}_{t}$ from Theorem~\ref{thm:product-general}; indeed,
this follows immediately from plugging in $\gamma$ for the values of $\change_{t}$,
in which case $\tilde{f}_{t}$ itself is quite similar to $\bar{f}_{t}$. 
%in Theorem~\ref{thm:product-bounded}.  %%% the log seems to get a little more messy, with something like \Log((1/\vc^2\gamma^2)\Log(1/\gamma^2))/\Log(1/\gamma^2}^{1/3}.
%
However, as $\bar{f}_{t}$ admits the above simplified explicit form in this special case, 
%the method described here has a simpler form, % and the log factor is a little better
we include a brief direct proof of this result as follows. % for the stated method.

\begin{proof}
As in the proof of Theorem~\ref{thm:product-general}, 
the proof is based on the general validity of \eqref{eqn:the-bound-term}.
In particular, taking $k_{t} = 1$ and $m_{t} = \min\!\left\{\bar{m}, (t-1)\right\}$,
the corresponding $\hat{f}_{t}$ is equal $\bar{f}_{t}$ for all $t > \bar{m}$.
Thus, \eqref{eqn:the-bound-term} implies
\begin{align*}
\sum_{t=1}^{T} \E\left[ \bar{f}_{t}(Z_{t}) \right] - \sum_{t=1}^{T} \inf_{f \in \F} \E\left[ f(Z_{t}) \right]
& \lesssim \bar{m} + \sum_{t=\bar{m}+1}^{T} \left( \sum_{q=t-\bar{m}}^{t-1} \change_{q+1} + \sqrt{\frac{\vc}{\bar{m}}} \right)
\\ & \leq \bar{m} + \sum_{t=\bar{m}+1}^{T} \left( \bar{m}\gamma + \left(\vc\gamma\right)^{1/3} \right)
%\lesssim \vc^{1/3}\gamma^{-2/3} + \sum_{t=\bar{m}+1}^{T} \left(\vc\gamma\right)^{1/3}
\lesssim \vc^{1/3}\gamma^{-2/3} + \left(\vc\gamma\right)^{1/3} T.
\end{align*}
The proof is completed by noting that, for $T > 1/\gamma$, 
we have $(\vc\gamma)^{1/3} T > \vc^{1/3} \gamma^{-2/3}$, so that
$\vc^{1/3}\gamma^{-2/3} + \left(\vc\gamma\right)^{1/3} T < 2 (\vc\gamma)^{1/3} T$.
\end{proof}

\section{Discussion and Open Problems}

%Toward weakening the assumptions further, it would be nice to
%determine whether analogous results on the cumulative excess risks of 
%some well-constructed learning method can be proven for $\alpha$-mixing
%processes.  Still more generally, we could ask whether our assumption \eqref{eqn:drift-rate}
%suffices for the existence of a method achieving sublinear cumulative excess risks
%under \emph{weakly mixing} processes.
%

%As for tightness of the results above, it is not clear whether
There remains an interesting question of whether the rate established in Theorem~\ref{thm:main}
is optimal.  In the case of stationary $\beta$-mixing processes, 
the best known result is $O\!\left( T^{\frac{3+r}{3+2r}} \right)$ \citep*{karandikar:02}.
This result can be recovered with our technique by setting $m_{t} = t-1$ and $k_{t} = \left\lceil (t-1)^{\frac{3}{3+2r}} \right\rceil$, 
noting that the term in \eqref{eqn:the-bound} depending on the $\change_{t}$ values is equal $0$ in the stationary case; 
indeed, to achieve this rate we required only that $\change_{t}=0$ for all $t$, which is a strictly weaker requirement than stationarity.
%\footnote{This fact 
%is itself interesting, since the condition that all $\change_{t}=0$ is a weaker assumption than strict stationarity, as it is only a consition on the marginal distributions.}
Stationary processes are a special case of $\alpha=0$ in \eqref{eqn:drift-rate}.
However, the result given in Theorem~\ref{thm:main} for $\alpha=0$ obtains a somewhat faster growth of $O\!\left( T^{\frac{3+2r}{3+3r}} \right)$.
Since the general case of $\alpha=0$ includes many nonstationary processes as well, 
it is not clear whether Theorem~\ref{thm:main} can be improved to provide a rate $O\!\left(T^{\frac{3+r}{3+2r}}\right)$ 
for general processes having $\alpha=0$.
If so, it would seem to require a different approach to the analysis, since if we were to take 
$m_{t} = t-1$ and $k_{t} = \left\lceil (t-1)^{\frac{3}{3+2r}} \right\rceil$ for a general process with $\alpha=0$,
the summation involving the $\change_{t}$ sequence in \eqref{eqn:the-bound} might then potentially grow faster than 
$T^{\frac{3+r}{3+2r}}$.  
Complementary to this question is the problem of establishing lower bounds on the minimax rates, 
which seems to require development of novel techniques for constructing nonstationary mixing processes 
for which the learning problem is challenging.

\appendix

\section{Proof of Lemma~\ref{lem:uniform-convergence}}
\label{app:uniform-convergence}

Since technically the original proof of Lemma~\ref{lem:uniform-convergence} was stated for identically distributed samples,
for completeness we present a brief proof of the result without this restriction.
The details follow a standard argument.
Specifically, following the usual symmetrization argument (e.g., \cite{boucheron:13}, Lemma 11.4), for $(Z_{1}^{\prime\prime},\ldots,Z_{m}^{\prime\prime})$
an independent copy of $(Z_{1}^{\prime},\ldots,Z_{m}^{\prime})$, and $\epsilon_{1},\ldots,\epsilon_{m}$ i.i.d. ${\rm Uniform}(\{-1,+1\})$ independent of all $Z_{i}^{\prime}$ and $Z_{i}^{\prime\prime}$,
by Jensen's inequality we have
\begin{align*}
\E\left[ \sup_{f \in \F} \left| \frac{1}{m} \sum_{t=1}^{m} (f(Z_{t}^{\prime})-\E[f(Z_{t}^{\prime})]) \right| \right]
& = \E\left[ \sup_{f \in \F} \left| \frac{1}{m} \sum_{t=1}^{m} (f(Z_{t}^{\prime})-\E[f(Z_{t}^{\prime\prime})]) \right| \right]
\\ \leq \E\left[ \sup_{f \in \F} \left| \frac{1}{m} \sum_{t=1}^{m} (f(Z_{t}^{\prime})-f(Z_{t}^{\prime\prime})) \right| \right]
& = \E\left[ \sup_{f \in \F} \left| \frac{1}{m} \sum_{t=1}^{m} \epsilon_{t}(f(Z_{t}^{\prime})-f(Z_{t}^{\prime\prime})) \right| \right]
\\ & \leq 2 \E\left[ \sup_{f \in \F} \left| \frac{1}{m} \sum_{t=1}^{m} \epsilon_{t}f(Z_{t}^{\prime}) \right| \right].
\end{align*}
Then Lemma 6.1 of \cite{massart:07} implies 
\begin{equation*}
\E\left[ \sup_{f \in \F} \bigg| \sum_{t=1}^{m} \epsilon_{t} f(Z_{t}^{\prime})/\sqrt{m} \bigg| \middle| Z_{1}^{\prime},\ldots,Z_{m}^{\prime} \right]
\leq 3\sum_{j=0}^{\infty} 2^{-j} \sqrt{ \ln( \mathcal{M}(2^{-j-1},\F,L_{2}(P_{m}^{\prime}) ) )},
\end{equation*}
where $\mathcal{M}(\delta,\F,L_{p}(P_{m}^{\prime}))$ is the $\delta$-packing number of $\F$ under $L_{p}(P_{m}^{\prime})$,
and $P_{m}^{\prime}$ is the empirical measure induced by $Z_{1}^{\prime},\ldots,Z_{m}^{\prime}$.
Since functions in $\F$ are bounded in $[0,1]$, 
$\mathcal{M}(\delta,\F,L_{2}(P_{m}^{\prime})) \leq \mathcal{M}(\delta^{2},\F,L_{1}(P_{m}^{\prime}))$,
and Theorem 6 of \cite{haussler:92} (based on Lemma 25 of \cite{pollard:84}) implies 
$\mathcal{M}(\delta^{2},\F,L_{1}(P_{m}^{\prime})) \leq 2 \left(\frac{2e}{\delta^{2}}\ln\frac{2e}{\delta^{2}}\right)^{\vc}$.
Thus,
$\sum_{j=0}^{\infty} 2^{-j} \sqrt{ \ln( \mathcal{M}(2^{-j-1},\F,L_{2}(P_{m}^{\prime}) ) )}
\leq c^{\prime} \sqrt{\vc}$ for a numerical constant $c^{\prime}$.
Combining the above inequalities yields the result.
%This can be observed by 
%noting that the symmetrization inequality based on Rademacher random variables continues to hold 
%for independent non-identically distributed random variables (see e.g., \cite{boucheron:13}, Lemma 11.4), 
%and the rest of the argument for this result by \cite{talagrand:94}
%conditions on the data, which can then be essentially arbitrary.
%%% The result follows by a simple combination of Lemma 11.4 from \cite{boucheron:13}, Lemma 6.1 from \cite{massart:03}, and Theorem 6 from \cite{haussler:92} (based on Lemma 25 from \cite{pollard:84}).

\subsubsection*{Acknowledgments}

We thank Tommi Jaakkola for several helpful discussions.

\bibliographystyle{natbib}
\bibliography{bib-concept-drift}

\begin{thebibliography}{}

\bibitem[Adams and Nobel(2010)Adams and Nobel]{adams:10}
Adams, T.~M. and Nobel, A.~B. (2010).
\newblock Uniform convergence of {V}apnik-{C}hervonenkis classes under ergodic
  sampling.
\newblock {\em Annals of Probability\/}, {\bf 38}(4), 1345--1367.

\bibitem[Anthony and Bartlett(1999)Anthony and Bartlett]{anthony:99}
Anthony, M. and Bartlett, P.~L. (1999).
\newblock {\em Neural Network Learning: Theoretical Foundations\/}.
\newblock Cambridge University Press.

\bibitem[Bartlett(1992)Bartlett]{Bartlett:1992:LSC:130385.130412}
Bartlett, P.~L. (1992).
\newblock Learning with a slowly changing distribution.
\newblock In {\em Proceedings of the 5th Annual Workshop on Computational
  Learning Theory\/}, pages 243--252.

\bibitem[Bartlett {\em et~al.}(2000)Bartlett, Ben-David, and
  Kulkarni]{bartlett:00}
Bartlett, P.~L., Ben-David, S., and Kulkarni, S.~R. (2000).
\newblock Learning changing concepts by exploiting the structure of change.
\newblock {\em Machine Learning\/}, {\bf 41}, 153--174.

\bibitem[Barve and Long(1996)Barve and Long]{barve:96}
Barve, R.~D. and Long, P.~M. (1996).
\newblock On the complexity of learning from drifting distributions.
\newblock In {\em Proceedings of the 9th Conference on Computational Learning
  Theory\/}, pages 122--130.

\bibitem[Barve and Long(1997)Barve and Long]{barve:97}
Barve, R.~D. and Long, P.~M. (1997).
\newblock On the complexity of learning from drifting distributions.
\newblock {\em Information and Computation\/}, {\bf 138}(2), 170--193.

\bibitem[Boucheron {\em et~al.}(2013)Boucheron, Lugosi, and
  Massart]{boucheron:13}
Boucheron, S., Lugosi, G., and Massart, P. (2013).
\newblock {\em Concentration Inequalities\/}.
\newblock Oxford University Press.

\bibitem[Bradley(1983)Bradley]{bradley:83}
Bradley, R.~C. (1983).
\newblock Absolute regularity and functions of {M}arkov chains.
\newblock {\em Stochastic Processes and their Applications\/}, {\bf 14},
  67--77.

\bibitem[Crammer {\em et~al.}(2010)Crammer, Mansour, Even-Dar, and
  Vaughan]{crammer:10}
Crammer, K., Mansour, Y., Even-Dar, E., and Vaughan, J.~W. (2010).
\newblock Regret minimization with concept drift.
\newblock In {\em Proceedings of the 23rd Conference on Learning Theory\/},
  pages 168--180.

\bibitem[Eberlein(1984)Eberlein]{eberlein:84}
Eberlein, E. (1984).
\newblock Weak convergence of partial sums of absolutely regular sequences.
\newblock {\em Statistics \& Probability Letters\/}, {\bf 2}, 291--293.

\bibitem[Freund and Mansour(1997)Freund and Mansour]{freund:97}
Freund, Y. and Mansour, Y. (1997).
\newblock Learning under persistent drift.
\newblock In {\em Proceedings of the 3rd European Conference on Computational
  Learning Theory\/}, pages 109--118.

\bibitem[Hanneke {\em et~al.}(2015)Hanneke, Kanade, and Yang]{concept-drift}
Hanneke, S., Kanade, V., and Yang, L. (2015).
\newblock Learning with a drifting target concept.
\newblock In {\em Proceedings of the 26th International Conference on
  Algorithmic Learning Theory\/}.

\bibitem[Haussler(1992)Haussler]{haussler:92}
Haussler, D. (1992).
\newblock Decision theoretic generalizations of the {PAC} model for neural net
  and other learning applications.
\newblock {\em Information and Computation\/}, {\bf 100}, 78--150.

\bibitem[Helmbold and Long(1991)Helmbold and Long]{helmbold:91}
Helmbold, D.~P. and Long, P.~M. (1991).
\newblock Tracking drifting concepts using random examples.
\newblock In {\em Proceedings of the 4th Annual Workshop on Computational
  Learning Theory\/}, pages 13--23.

\bibitem[Helmbold and Long(1994)Helmbold and Long]{helmbold:94}
Helmbold, D.~P. and Long, P.~M. (1994).
\newblock Tracking drifting concepts by minimizing disagreements.
\newblock {\em Machine Learning\/}, {\bf 14}(1), 27--45.

\bibitem[Karandikar and Vidyasagar(2002)Karandikar and
  Vidyasagar]{karandikar:02}
Karandikar, R.~L. and Vidyasagar, M. (2002).
\newblock Rates of uniform convergence of empirical means with mixing
  processes.
\newblock {\em Statistics \& Probability Letters\/}, {\bf 58}(3), 297--307.

\bibitem[Koltchinskii(2006)Koltchinskii]{koltchinskii:06}
Koltchinskii, V. (2006).
\newblock Local rademacher complexities and oracle inequalities in risk
  minimization.
\newblock {\em The Annals of Statistics\/}, {\bf 34}(6), 2593--2656.

\bibitem[Kontorovich(2007)Kontorovich]{kontorovich:07}
Kontorovich, L. (2007).
\newblock {\em Measure Concentration of Strongly Mixing Processes with
  Applications\/}.
\newblock Ph.D. thesis, Carnegie Mellon University.

\bibitem[Kuznetsov and Mohri(2014)Kuznetsov and Mohri]{kuznetsov:14}
Kuznetsov, V. and Mohri, M. (2014).
\newblock Generalization bounds for time series prediction with non-stationary
  processes.
\newblock In {\em Proceedings of The 25th International Conference on
  Algorithmic Learning Theory\/}.

\bibitem[Long(1999)Long]{long:99}
Long, P.~M. (1999).
\newblock The complexity of learning according to two models of a drifting
  environment.
\newblock {\em Machine Learning\/}, {\bf 37}(3), 337--354.

\bibitem[Mansour {\em et~al.}(2009)Mansour, Mohri, and
  Rostamizadeh]{mansour:09}
Mansour, Y., Mohri, M., and Rostamizadeh, A. (2009).
\newblock Domain adaptation: {L}earning bounds and algorithms.
\newblock In {\em Proceedings of the 22nd Conference on Learning Theory\/}.

\bibitem[Massart(2007)Massart]{massart:07}
Massart, P. (2007).
\newblock {\em Concentration Inequalities and Model Selection. {E}cole
  d'et\'{e} de {P}robabilit\'{e}s de {S}aint-{F}lour {XXXIII} - 2003\/}.
\newblock Lecture Notes in Mathematics 1896. Springer.

\bibitem[Mohri and {Mu\~{n}oz Medina}(2012)Mohri and {Mu\~{n}oz
  Medina}]{mohri:12}
Mohri, M. and {Mu\~{n}oz Medina}, A. (2012).
\newblock New analysis and algorithm for learning with drifting distributions.
\newblock In {\em Proceedings of The 23rd International Conference on
  Algorithmic Learning Theory\/}.

\bibitem[Pollard(1984)Pollard]{pollard:84}
Pollard, D. (1984).
\newblock {\em Convergence of Stochastic Processes\/}.
\newblock Springer-Verlag, Berlin / New York.

\bibitem[Pollard(1990)Pollard]{pollard:90}
Pollard, D. (1990).
\newblock {\em Empirical Processes: Theory and Applications\/}.
\newblock {NSF-CBMS} Regional Conference Series in Probability and Statistics,
  Vol. 2, Institute of Mathematical Statistics and American Statistical
  Association.

\bibitem[Shalev-Shwartz {\em et~al.}(2010)Shalev-Shwartz, Shamir, Srebro, and
  Sridharan]{shalev-shwartz:10}
Shalev-Shwartz, S., Shamir, O., Srebro, N., and Sridharan, K. (2010).
\newblock Learnability, stability and uniform convergence.
\newblock {\em Journal of Machine Learning Research\/}, {\bf 11}, 2635--2670.

\bibitem[van~der Vaart and Wellner(1996)van~der Vaart and
  Wellner]{van-der-Vaart:96}
van~der Vaart, A.~W. and Wellner, J.~A. (1996).
\newblock {\em Weak Convergence and Empirical Processes\/}.
\newblock Springer.

\bibitem[Vapnik(1982)Vapnik]{vapnik:82}
Vapnik, V. (1982).
\newblock {\em Estimation of Dependences Based on Empirical Data\/}.
\newblock Springer-Verlag New York.

\bibitem[Vapnik(1998)Vapnik]{vapnik:98}
Vapnik, V. (1998).
\newblock {\em Statistical Learning Theory\/}.
\newblock John Wiley $\&$ Sons, Inc.

\bibitem[Vidyasagar(2003)Vidyasagar]{vidyasagar:03}
Vidyasagar, M. (2003).
\newblock {\em Learning and Generalization with Applications to Neural
  Networks\/}.
\newblock Springer-Verlag, 2nd edition.

\bibitem[Volkonskii and Rozanov(1959)Volkonskii and Rozanov]{volkonskii:59}
Volkonskii, V.~A. and Rozanov, Y.~A. (1959).
\newblock Some limit theorems for random functions. {I}.
\newblock {\em Theory of Probability and its Applications\/}, {\bf 4},
  178--197.

\bibitem[Yang(2011)Yang]{drifting-distribution}
Yang, L. (2011).
\newblock Active learning with a drifting distribution.
\newblock In {\em Advances in Neural Information Processing Systems 24\/}.

\bibitem[Yu(1994)Yu]{yu:94}
Yu, B. (1994).
\newblock Rates of convergence for empirical processes of stationary mixing
  sequences.
\newblock {\em The Annals of Probability\/}, {\bf 22}(1), 94--116.

\end{thebibliography}

\ignore{
\providecommand{\bysame}{\leavevmode\hbox to3em{\hrulefill}\thinspace}
\providecommand{\MR}{\relax\ifhmode\unskip\space\fi MR }
% \MRhref is called by the amsart/book/proc definition of \MR.
\providecommand{\MRhref}[2]{%
  \href{http://www.ams.org/mathscinet-getitem?mr=#1}{#2}
}
\providecommand{\href}[2]{#2}

}% end ignore (just let bibtex make the bibliography)

\end{document}